\documentclass[conference]{IEEEtran}
\IEEEoverridecommandlockouts
\usepackage{cite}
\usepackage{url}
\usepackage{amsmath,amssymb,amsfonts}
\usepackage{algorithmic}
\usepackage{graphicx}
\usepackage{textcomp}
\usepackage{xcolor}
\usepackage{booktabs}
\usepackage{hyperref}
\usepackage{cleveref}
\usepackage{amsthm}
\usepackage{mathtools}

\newtheorem{proposition}{Proposition}
\newtheorem*{assumption}{Assumption}

\usepackage{booktabs}
\usepackage{makecell}
\newcommand{\NA}{\multicolumn{1}{c}{\textemdash}}
\newcommand{\triple}[3]{\makecell[c]{#1\\#2\\#3}}

\usepackage{stfloats}
\usepackage{balance}

\def\BibTeX{{\rm B\kern-.05em{\sc i\kern-.025em b}\kern-.08em
    T\kern-.1667em\lower.7ex\hbox{E}\kern-.125emX}}
\begin{document}

\title{When Learning Rates Go Wrong:\\
Early Structural Signals in PPO Actor–Critic}

\author{
\makebox[.33\linewidth][c]{
\begin{tabular}{c}
Alberto Fernández-Hernández*\thanks{*Corresponding author} \\
\textit{Universitat Politècnica de València} \\
Valencia, Spain \\
a.fernandez@upv.es
\end{tabular}
}
\makebox[.33\linewidth][c]{
\begin{tabular}{c}
Cristian Pérez-Corral\\
\textit{Universitat Politècnica de València} \\
Valencia, Spain \\
cpercor@upv.es
\end{tabular}
}
\makebox[.33\linewidth][c]{
\begin{tabular}{c}
Jose I. Mestre \\
\textit{Universitat Politècnica de València} \\
Valencia, Spain \\
jmiravet@uji.es
\end{tabular}
}\\[1em]
\makebox[.33\linewidth][c]{
\begin{tabular}{c}
Manuel F. Dolz \\
\textit{Universitat Jaume I} \\
Castelló de la Plana, Spain \\
dolzm@uji.es
\end{tabular}
}
\makebox[.33\linewidth][c]{
\begin{tabular}{c}
Jose Duato \\
\textit{Openchip \& Software Technologies} \\
Barcelona, Spain \\
jose.duato@openchip.com
\end{tabular}
}
\makebox[.33\linewidth][c]{
\begin{tabular}{c}
Enrique S. Quintana-Ortí \\
\textit{Universitat Politècnica de València} \\
Valencia, Spain \\
quintana@disca.upv.es
\end{tabular}
}
}

%\author{\IEEEauthorblockN{Anonymous Authors}
%\IEEEauthorblockA{Affiliation withheld for review}}

\maketitle
\begin{abstract}
Deep reinforcement learning systems are highly sensitive to the learning rate (LR), and selecting stable and performant training runs often requires extensive hyperparameter search. 
In Proximal Policy Optimization (PPO) actor--critic methods, small LR values lead to slow convergence, whereas large LR values may induce instability or collapse. 
We analyse this phenomenon from the behavior of the hidden neurons in the network using the Overfitting-Underfitting Indicator (OUI), a metric that quantifies the balance of binary activation patterns over a fixed probe batch. 
We introduce an efficient batch-based formulation of OUI and derive a theoretical connection between LR and activation sign changes, clarifying how a correct evolution of the neuron's inner structure depends on the step size.

Empirically, across three discrete-control environments and multiple seeds, we show that OUI measured at only 10\% of training already discriminates between LR regimes. 
We observe a consistent asymmetry: critic networks achieving highest return operate in an intermediate OUI band (avoiding saturation), whereas actor networks achieving highest return exhibit comparatively high OUI values.
We then compare OUI-based screening rules against early return, clip-based, divergence-based, and flip-based criteria under matched recall over successful runs.
In this setting, OUI provides the strongest early screening signal: OUI alone achieves the best precision at broader recall, while combining early return with OUI yields the highest precision in best-performing screening regimes, enabling aggressive pruning of unpromising runs without requiring full training.
\end{abstract}

\begin{IEEEkeywords}
reinforcement learning, PPO, actor--critic, OUI, learning rate, internal metrics, stability
\end{IEEEkeywords}
\section{Introduction}

Reinforcement Learning (RL) \cite{sutton2018reinforcement} studies how an agent can learn to make sequential decisions through interaction with an environment, improving its behavior from scalar reward signals rather than supervised labels. 
The objective is to maximize the expected cumulative reward obtained when following a given parametric policy. 
In modern deep RL, both policy and auxiliary estimators are represented by neural networks, whose training dynamics are highly sensitive to hyperparameters, particularly the learning rate (LR).

Actor--critic methods \cite{konda2000actor} constitute one of the most successful frameworks for continuous control and high-dimensional problems. 
In this setting, the \emph{actor} is a neural network receiving a state as input and outputting a distribution over actions. 
The \emph{critic} is a second network, typically mapping the state to a scalar estimate of the expected return. 
The critic provides a learning signal to the actor, while the actor’s evolving policy determines the distribution of states on which the critic is trained. 
Both networks are, therefore, tightly coupled: modifying the actor changes the critic’s data distribution, and modifying the critic alters the update direction of the actor.

Proximal Policy Optimization (PPO) \cite{schulman2017ppo} stabilizes actor updates by constraining how far the policy can move at each iteration. Despite this mechanism, the LR remains a critical factor. 
If too small, learning progresses slowly and may stall; if too large, updates can induce abrupt representation changes, degrade value estimation, and ultimately collapse performance. 
Understanding how LR affects not only return but also the internal behavior of the networks is therefore of practical and theoretical interest.

In this work, we study the effect of LR through an internal metric, the Overfitting-Underfitting Indicator (OUI) \cite{OUI_reference}. Rather than evaluating performance solely through external signals such as return or loss values, OUI quantifies how effectively the network utilizes its internal representational capacity on a fixed probe set of states. In this sense, OUI diagnoses at each stage of training the \emph{structure} of the network, assigning a value related to the richness and variability of the neurons' behaviors across inputs. 

Intuitively, it measures how evenly different neurons participate in partitioning a batch of inputs into activation regions: high OUI indicates balanced and distributed usage of neuron activations, whereas low OUI reflects structural saturation, with many neurons behaving almost uniformly.

This perspective allows us to analyse training runs from inside the model, that is, by looking at the behavior of the hidden neurons of the network.
Instead of asking only whether a given LR achieves high return, we ask how it shapes the structural evolution of actor and critic representations, and whether early structural signals can help discriminate between stable and unstable regimes.

\medskip

Our central question is therefore: \emph{how does the LR affect the internal organisation of actor--critic networks, and can early structural measurements guide the selection of stable and high-performing training runs?}

\medskip
The main contributions of this work are the following:

\begin{itemize}
\item We introduce a batch-based formulation of the Overfitting--Underfitting Indicator (OUI) suitable for probing the internal structure of actor--critic networks during RL training.

\item We derive a theoretical connection between the learning rate, activation sign changes, and the evolution of OUI, providing a structural interpretation of how gradient step size affects internal network organization.

\item We empirically show across three discrete-control environments that OUI measured at only 10\% of training already discriminates between LR regimes, revealing a consistent asymmetry between actor and critic structural behavior.

\item We demonstrate that OUI provides a strong early screening signal for RL training runs, outperforming commonly monitored PPO signals such as early return, Kullback-Leibler (KL divergence, clipping statistics, and activation flip rates under matched recall.
\end{itemize}

\medskip
The organization of the article is as follows: Section~\ref{sec:related_work} reviews related work on RL stability and internal network diagnostics. 
Section~\ref{sec:theory} formalizes the relationship between learning rate, activation flips, and OUI dynamics. 
Section~\ref{sec:experiments} presents the experimental protocol and empirical results across environments. 
Finally, Section~\ref{sec:conclusion} summarizes the findings and discusses limitations and future directions.

\section{Related Work}
\label{sec:related_work}

RL performance is well known to be sensitive to optimization and implementation details, including LR and other hyperparameters \cite{henderson2018deep}, often yielding high variance across random seeds and unstable conclusions if protocols are not carefully controlled. 
In policy-gradient methods, PPO \cite{schulman2017ppo} partially mitigates instability by constraining policy updates, yet practitioners routinely monitor additional signals (e.g., KL divergence \cite{KL}, clipping statistics) as training can still enter failure regimes depending on the LR and data non-stationarity.

A separate line of work studies internal network behavior through activation- and representation-level measurements, aiming to characterize how learned neurons evolve beyond external task metrics. 
Classical tools include representational similarity measures (e.g., CKA \cite{kornblith2019similarity} and SVCCA \cite{raghu2017svcca}) and activation statistics that quantify how neurons align, collapse, or remain diverse during training. 
While such analyses are widespread in supervised learning, their systematic use as practical probes for deep RL stability remains limited, despite RL exhibiting pronounced sensitivity to training dynamics.

OUI \cite{OUI_reference} was introduced as a lightweight activation-based metric to quantify how evenly a network uses its neuron activations on a fixed probe set. 
Prior work considered a pairwise formulation and focused on supervised classification, studying its relationship with regularisation mechanisms such as weight decay. 
In this work, we adapt OUI to an lightweight batch-based formulation and apply it to PPO actor--critic systems, using OUI as an early probe to analyse how the LR shapes the behavior of the hidden layers differently in the actor and the critic.

\section{Theory: LR, Flips, and OUI Dynamics}
\label{sec:theory}

We now formalize the structural mechanism through which the LR shapes internal representations. 
Our objective is to connect the three elements: 
(i) gradient step size (LR), 
(ii) activation sign changes, and 
(iii) evolution of OUI.

\subsection{Activation Patterns and OUI Definition}

Fix a probe batch of $B$ states $S_{\text{probe}}=\{x_1,\dots,x_B\}$, which remains constant throughout the analysis. 
Consider a given network architecture and a layer $l$ with $d_l$ neurons. Let $\theta$ denote the network parameters. For each input $x_b$ in the probe batch, let 
$A^{(l)}(x_b;\theta)\in\mathbb{R}^{d_l}$ be the vector of preactivations at layer $l$ obtained from the forward pass.

We define the binary activation mask
\[
M^{(l)}_{b,j}(\theta)
=
\mathbf{1}\{A^{(l)}_j(x_b;\theta) > 0\},
\]
for $b\in\{1,\dots,B\}$ and $j\in\{1,\dots,d_l\}$, indicating whether neuron $j$ is active on input $x_b$.

For each neuron $j$, let
\[
s_j(\theta) \coloneqq \sum_{b=1}^{B} M^{(l)}_{b,j}(\theta)
\quad\text{and}\quad
p_j(\theta) \coloneqq \frac{s_j(\theta)}{B}
\]
denote respectively the number and the fraction of probe inputs that activate neuron $j$.

The batch-based OUI at layer $l$ is then defined as
\begin{equation}
\mathrm{OUI}^{(l)}(\theta)
=
\frac{1}{d_l}
\sum_{j=1}^{d_l}
\frac{\min(s_j(\theta), B - s_j(\theta))}{\lfloor B/2 \rfloor}
\in [0,1].
\label{eq:oui_def}
\end{equation}
That is, OUI averages across neurons $j \in \{1,\ldots,d_l\}$ in layer $l$ a normalized score that attains its maximum when $s_j(\theta)$ is closest to $\lfloor B/2 \rfloor$. This can be written equivalently\footnote{The equality holds exactly when $B$ is even. When $B$ is odd, the expression differs only by a multiplicative factor of $(B-1)/B$, which is negligible for our purposes.} as
\begin{equation}
\mathrm{OUI}^{(l)}(\theta)
\approx
\frac{1}{d_l}
\sum_{j=1}^{d_l}
\left(
1 - 2\big|p_j(\theta)-\tfrac12\big|
\right).
\label{eq:oui_balance}
\end{equation}
Equation ~\eqref{eq:oui_balance} shows that OUI measures how evenly each neuron activation partitions the probe batch. 
It is maximized when many neurons split the batch approximately $50/50$, and decreases when neurons become structurally biased (almost always active or inactive).

\subsection{Learning Rate and Activation Sign Flips}

We now analyse how a single gradient step affects these activation patterns. 
Consider an update $\theta^+ = \theta - \eta g$, where $\eta>0$ is the LR and $g$ is the update direction.

To understand how $\mathrm{OUI}^{(l)}$ evolves, it suffices to examine how each $p_j(\theta)$ changes. 
Fix a neuron index $j\in\{1,\dots,d_l\}$ and rewrite for brevity the scalar preactivation
\begin{equation*}
X_b(\theta) \coloneqq A^{(l)}_j(x_b;\theta)
\end{equation*}
of the sample $x_b$, and its corresponding binary gate
\begin{equation*}
G_b(\theta) \coloneqq \mathbf{1}\{X_b(\theta)>0\}.
\end{equation*}

A \emph{flip} occurs whenever $G_b(\theta^+)\neq G_b(\theta)$, that is, when the update causes $X_b$ to cross zero.

Define the signed directional derivative of $X_b$ in parameter space along the update direction $-g$:
\begin{equation*}
U_b(\theta,g) \coloneqq -\langle \nabla_\theta X_b(\theta),\, g\rangle.
\label{eq:Ub_def}
\end{equation*}

\begin{assumption}
\label{ass:regularity} We use the following mild regularity conditions at the current iterate $\theta$.
\begin{enumerate}
  \renewcommand{\labelenumi}{(A\arabic{enumi})}
  \item For each $b$, the map $\theta\mapsto X_b(\theta)$ is twice continuously differentiable in a neighbourhood of $\theta$.
  \item For each $b$, conditioned on the current training history (including the randomness used to build $g$), the random variable $X_b(\theta)$ admits a density $f_b$ that is continuous at $0$.
  \item The moment $\mathbb{E}(|U_b(\theta,g)|)$ is finite.
\end{enumerate}
\end{assumption}
These assumptions are standard in first-order small-step analyses: (A1) controls the Taylor remainder, (A2) rules out pathological ``atoms'' at the switching boundary $X_b=0$; and (A3) ensures integrability.

\begin{proposition}
\label{prop:flip_rate}
Under Assumptions (A1), (A2) and (A3), for each sample $b$,
\begin{equation*}
\mathbb{P}\!\left(G_b(\theta^+)\neq G_b(\theta)\right)
=
\eta\, f_b(0)\, \mathbb{E}\!\left(|U_b(\theta,g)|\right)+ o(\eta),
\end{equation*}
where the probability and conditional expectation are taken over the randomness in $g$ (and any other training randomness), conditioned on the current iterate $\theta$.
\end{proposition}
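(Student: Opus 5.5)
The plan is a first-order Taylor expansion of the preactivation along the update, followed by a "boundary layer" computation of the probability that $X_b$ lies within distance $\eta|U_b|$ of zero on the appropriate side. By (A1), for each $b$ we have
\[
X_b(\theta^+) = X_b(\theta) + \eta\, U_b(\theta,g) + R_b(\eta), \qquad |R_b(\eta)| \le C\eta^2\|g\|^2,
\]
where $C$ bounds the Hessian of $X_b$ on a neighbourhood of $\theta$. A flip occurs exactly when $X_b(\theta)$ and $X_b(\theta^+)$ lie on opposite sides of zero. Ignoring the remainder, this is the event
\[
E_\eta \coloneqq \{-\eta U_b < X_b \le 0,\ U_b>0\} \cup \{0< X_b \le -\eta U_b,\ U_b<0\}.
\]
In words, $X_b$ must lie in a window of width $\eta|U_b|$ adjacent to zero, on the side from which the update pushes it across.

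The main computation conditions on $g$ (equivalently on $U_b$). By (A2), conditionally on the training history, and hence on $g$, $X_b(\theta)$ has density $f_b$ continuous at $0$. Therefore
\[
\mathbb{P}(E_\eta\mid g) = \int_{\text{window}} f_b(x)\,dx = \eta|U_b|\,\bigl(f_b(0)+\epsilon(\eta|U_b|)\bigr),
\]
with $\epsilon(t)\to 0$ as $t\to 0$. Taking expectations gives $\eta f_b(0)\,\mathbb{E}|U_b| + \eta\,\mathbb{E}\bigl(|U_b|\,\epsilon(\eta|U_b|)\bigr)$. To show the last term is $o(\eta)$, I would use dominated convergence. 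The integrand $|U_b|\,\epsilon(\eta|U_b|)$ tends to $0$ pointwise, and it is dominated by $|U_b|\,(\sup_{|x|\le\delta} f_b + f_b(0))$ on $\{\eta|U_b|\le\delta\}$. On the complement, the probability is trivially bounded by $1 \le |U_b|\eta/\delta$, and $\mathbb{E}\bigl(|U_b|\mathbf{1}\{|U_b|>\delta/\eta\}\bigr)\to 0$ by (A3). This is where integrability is used.

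The step I expect to be the main obstacle is controlling the Taylor remainder. The true flip event differs from $E_\eta$ only when $X_b$ lies within $|R_b|\le C\eta^2\|g\|^2$ of the window boundaries. The symmetric difference is therefore contained in $\{|X_b|\le \eta|U_b| + C\eta^2\|g\|^2\}\setminus\{\text{interior of window}\}$, more precisely in two intervals of width $O(\eta^2\|g\|^2)$. Their probability is $O\bigl(\eta^2\,\mathbb{E}(\|g\|^2\wedge \eta^{-2})\bigr)$, and the density bound near $0$ again comes from (A2). This term is $o(\eta)$ provided $g$ has enough integrability. Strictly, (A3) only controls $U_b$, so I would either argue that $\eta\|g\|^2\to 0$ in probability and truncate as above, so that $\eta\,\mathbb{E}(\|g\|^2\wedge\eta^{-1})\to0$ whenever $g$ is a.s. finite, or interpret the "mild regularity" as including local boundedness of $g$. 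A related subtlety is that the Hessian bound holds only on a neighbourhood. I would handle it by restricting to $\{\eta\|g\|\le r\}$, whose complement has probability $o(1)$, and absorbing that complement via the same truncation. Combining the main term with these two $o(\eta)$ error terms yields the stated expansion.
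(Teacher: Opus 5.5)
Your argument has the same structure as the paper's proof. You Taylor-expand $X_b(\theta^+)$ along $-\eta g$, identify the flip event (up to the remainder) with $X_b$ lying in a window of width $\eta|U_b|$ on the side from which the update pushes it across zero, integrate $f_b$ over that window using continuity at $0$, and take expectations.

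You are more careful than the paper in two places. First, the paper writes $\mathbb{P}(\text{flip}\mid U_b=u)=\eta|u|f_b(0)+o(\eta)$ and then simply averages over $u$, although that $o(\eta)$ is not uniform in $u$. Your split into $\{\eta|U_b|\le\delta\}$ and its complement, with $\mathbb{E}\bigl(|U_b|\mathbf{1}\{|U_b|>\delta/\eta\}\bigr)\to0$, is exactly how (A3) should enter, and that part is correct. Second, the paper writes $|R_b|\le C_b\eta^2$ with a constant $C_b$, which silently absorbs $\|g\|^2$ and tacitly assumes $g$ is bounded. You are right that (A1)--(A3) do not control $\|g\|$.

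Your first proposed way around this fails. Conditionally on $g$, the remainder changes the flip probability by at most $\min\{1,K\eta^2\|g\|^2\}$, where $K$ is the local Hessian bound times $\sup_{|x|\le\delta}f_b$. So you must show $\mathbb{E}\bigl(\min\{\eta^{-1},K\eta\|g\|^2\}\bigr)\to0$. Your own bound $O\bigl(\eta^2\,\mathbb{E}(\|g\|^2\wedge\eta^{-2})\bigr)$ is right, but in the next step you replaced the truncation level $\eta^{-2}$ by $\eta^{-1}$. The correct quantity $\eta\,\mathbb{E}(\|g\|^2\wedge\eta^{-2})$ need not vanish for a.s.\ finite $g$. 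Likewise, on $\{\eta\|g\|>r\}$ the flip indicator can only be bounded by $1$, so that set needs probability $o(\eta)$, not $o(1)$.

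Neither step can be repaired, because the statement itself is false without integrability of $g$. Use the same reading of (A2) that you and the paper use, where $X_b$ keeps density $f_b$ given $g$. Take $X(\theta)=\theta_1-\theta_2^2$ at a point with $\theta_2=0$, let $\theta_1$ be uniform on $[-1,1]$, and let $g=(0,Z)$ be independent of $\theta_1$ with $\mathbb{P}(|Z|>t)=1/t$ for $t\ge1$. Then (A1)--(A3) hold with $U_b\equiv0$, yet $X_b(\theta^+)=X_b-\eta^2Z^2$ and $\mathbb{P}(\text{flip})=\eta-\eta^2/2$ for $\eta\le1$, which is not $o(\eta)$.

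So you need your second option (bounded $g$, which is what the paper's constant $C_b$ presupposes). A more economical alternative is the extra hypothesis $\mathbb{E}\|g\|<\infty$. Then $\min\{\eta^{-1},K\eta\|g\|^2\}\le\sqrt{K}\,\|g\|$ gives dominated convergence for the remainder term. Also $\mathbb{P}(\|g\|>r/\eta)\le(\eta/r)\,\mathbb{E}\bigl(\|g\|\mathbf{1}\{\|g\|>r/\eta\}\bigr)=o(\eta)$, which disposes of the neighbourhood restriction.
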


\begin{proof}
Fix $b$ and write $X_b\coloneqq  X_b(\theta)$ and $U_b\coloneqq U_b(\theta,g)$ for brevity. 
By (A1), a first-order Taylor expansion of $X_b(\theta^+)$ around $\theta$ gives
\begin{equation}
X_b(\theta^+) = X_b + \eta U_b + R_b,
\qquad
|R_b| \le C_b \eta^2,
\label{eq:taylor}
\end{equation}
for some constant $C_b$ depending on local second derivatives.
A flip occurs if $X_b$ and $X_b(\theta^+)$ have opposite signs, i.e.,
\begin{equation*}
X_b\big(X_b+\eta U_b+R_b\big) < 0.
\end{equation*}
Condition on the value of $U_b=u$. Ignoring the $R_b$ remainder momentarily, the event $X_b(X_b+\eta u)<0$ is equivalent to $X_b\in[-\eta u,0]$ when $u>0$ and $X_b\in[0,-\eta u]$ when $u<0$. 
Hence, using (A2) and the continuity of $f_b$ at $0$,
\begin{equation*}
\mathbb{P}(\text{flip}\mid U_b=u)
=
\int_0^{\eta|u|} f_b(\pm t)\,dt
=
\eta |u| f_b(0) + o(\eta),
\end{equation*}
where $\pm t$ indicates the corresponding side depending on the sign of $u$. 
The remainder $R_b$ in \eqref{eq:taylor} only enlarges/shrinks the interval by $o(\eta)$, thus contributing at most $o(\eta)$ to the probability by continuity of $f_b$ at $0$. 
Taking expectation over $U_b$ and using (A3) completes the proof.
\end{proof}

\medskip

Proposition ~\ref{prop:flip_rate} formalizes the geometric intuition:
the expected flip rate is proportional to the step size $\eta$, to the mass of probe samples near the activation boundary ($f_b(0)$), and to the normal component of the update (through $U_b$). 
Since
\[
U_b = -\|\nabla_\theta X_b\|\,\|g\|\cos\phi_b,
\]
the angle $\phi_b$ between $\nabla_\theta X_b$ and $g$ determines how strongly the update pushes towards the boundary.

\subsection{From Flip Dynamics to OUI Evolution}

Proposition~\ref{prop:flip_rate} concerns individual samples $b$ and a single neuron $j$, and is the formal result of this section.
The remainder of this section uses it as a starting point for a first-order heuristic analysis of how OUI evolves at the layer level; this analysis is meant to build intuition and motivate the empirical observations in Section~\ref{sec:experiments}, rather than to constitute a formal proof.

\medskip

To relate Proposition~\ref{prop:flip_rate} to OUI, we track how the positivity proportion $p_j(\theta)=\frac{1}{B}\sum_{b=1}^B G_b(\theta)$ changes after one step.
Let $N_{-+}$ be the number of indices $b$ for which $G_b$ flips from $0$ to $1$, and $N_{+-}$ the number of indices for which $G_b$ flips from $1$ to $0$. 
Then the positivity count updates as $s_j(\theta^+)=s_j(\theta)+N_{-+}-N_{+-}$, hence
\begin{equation}
p_j(\theta^+) - p_j(\theta) = \frac{1}{B}\big(N_{-+}-N_{+-}\big).
\label{eq:pj_change}
\end{equation}
Crucially, the net change in $p_j$ depends on the imbalance between the two flip directions, not on their total count. 
Therefore, it is possible to have many flips but a negligible drift in $p_j$ (or even a drift away from $1/2$). Note that $N_{-+}$ and $N_{+-}$ are correlated random variables whose joint distribution depends on the full gradient direction and the geometry of the loss landscape; we do not model this dependence here, and treat \eqref{eq:pj_change} as an accounting identity rather than a probabilistic bound.

We can now translate changes in $p_j$ into changes in $\mathrm{OUI}^{(l)}$ using the balance form \eqref{eq:oui_balance}. 
For each neuron $j$, define the per-neuron contribution
\begin{equation*}
\mathrm{OUI}_j(\theta) \coloneqq 1 - 2\big|p_j(\theta)-\tfrac12\big|.
\end{equation*}
Whenever $p_j(\theta)\neq 1/2$, this map is differentiable in $p_j$ with derivative
\begin{equation*}
\partial\, \mathrm{OUI}_j/\partial p_j(\theta)
=
-2\,\mathrm{sgn}\!\left(p_j(\theta)-\tfrac12\right).
\end{equation*}
Under the additional assumption that $\eta$ is small enough that no $p_j$ crosses $1/2$ in a single update—a condition that holds in the small-$\eta$ regime but may be violated for the largest LRs in our sweep—a first-order expansion gives the approximation
\begin{align}
\Delta \mathrm{OUI}^{(l)}
:&=
\mathrm{OUI}^{(l)}(\theta^+) - \mathrm{OUI}^{(l)}(\theta) \notag \\
&\approx
-\frac{2}{d_l}\sum_{j=1}^{d_l}
\mathrm{sgn}\!\left(p_j(\theta)-\tfrac12\right)\,\Delta p_j,
\label{eq:delta_oui_general}
\end{align}
where $\Delta p_j \coloneqq p_j(\theta^+) - p_j(\theta)$.
Equation~\eqref{eq:delta_oui_general} highlights a structural property that is not apparent from flip counts alone. 
While Proposition~\ref{prop:flip_rate} shows that the frequency of activation boundary crossings scales linearly with $\eta$, 
\eqref{eq:delta_oui_general} suggests that the direction of OUI change is governed by the signed drift of each $p_j$ relative to the balance point. 
In particular, the contribution of neuron $j$ to $\Delta \mathrm{OUI}^{(l)}$ is proportional to the projection of its drift $\Delta p_j$ onto the direction that reduces imbalance, namely $-\mathrm{sgn}(p_j-\tfrac12)$.

This picture suggests that OUI is insensitive to the total number of flips per se; it responds primarily to whether those flips collectively move neurons toward or away from structural balance. 
Large LRs increase the expected number of flips (by Proposition~\ref{prop:flip_rate}), but if the induced drift pushes several $p_j$ away from $\tfrac12$, the net effect may be a decrease in OUI despite high structural activity. 
Conversely, moderate updates may generate fewer flips yet increase OUI if their net drift reduces imbalance.

\medskip
\noindent\textbf{Heuristic principle (empirically motivated).}
\emph{The LR governs the magnitude of structural motion. The first-order analysis above suggests that the direction of OUI change is determined by the alignment between that motion and the equilibrium manifold $\{p_j=\tfrac12\}$, though a complete characterisation would require controlling the joint distribution of $(N_{-+}, N_{+-})$ across neurons, which we leave as an open theoretical question.}

\medskip

This framing helps explain why different networks can exhibit distinct OUI--LR profiles. 
Even when flip rates increase monotonically with $\eta$ (as Proposition~\ref{prop:flip_rate} predicts), the induced directional drifts $\Delta p_j$ depend on the optimization objective and data distribution. 
Actor and critic updates therefore need not affect structural balance in the same way, providing a plausible mechanism for the qualitatively different OUI trajectories observed empirically under identical LR sweeps.

\section{Experiments}
\label{sec:experiments}

In this section, we evaluate how the LR shapes performance and internal neuron activity in PPO actor--critic systems using a fully specified and reproducible protocol.

\subsection{Experimental Setup}

We consider three discrete-control benchmarks: CartPole-v1 \cite{openaigym} (4D observation, 2 actions), LunarLander-v3 \cite{openaigym} (8D observation, 4 actions), and MiniGrid-Empty-8x8-v0 \cite{minigrid} with \texttt{RGBImgPartialObsWrapper} and \texttt{ImgObsWrapper} (RGB image observation $(56,56,3)$, 7 actions). 
Actor and critic are implemented as fully separate networks. 
CartPole uses MLP actor and critic with hidden layers $(64,64)$ and ReLU activations; LunarLander uses $(128,128)$; MiniGrid uses a CNN with architecture $\texttt{Conv(16,3x3,s2)} \rightarrow \texttt{Conv(32,3x3,s2)} \rightarrow \texttt{FC(128)} \rightarrow \texttt{out}$. 
Image inputs are scaled to $[0,1]$; no online observation normalization is applied. Policies are discrete via a Categorical distribution and critics output scalar $V(s)$. All runs use PPO with $\gamma=0.99$, GAE $\lambda=0.95$, clip range $0.1$, 4 update epochs, minibatch size 256, value coefficient $0.5$, entropy coefficient $0$, gradient clipping at $0.5$, Adam optimizer with LR given by $\eta$ and $\epsilon=10^{-5}$ used for numerical stability, as usual. 
Training horizons are 120k timesteps (rollout 1024) for CartPole and LunarLander, and 20k timesteps (rollout 512) for MiniGrid.

We sweep a canonical grid of 13 logarithmically spaced LRs from $3.16\times10^{-5}$ to $3.16\times10^{-2}$. 
Each environment–LR pair is trained with 10 seeds (0–9), yielding 130 runs per environment. 
All aggregate curves report the median across seeds with interquartile range.

For each environment, a fixed probe batch $S_{\text{probe}}$ is generated once using a random policy (seed 0) and reused across runs (size 1024 for CartPole and LunarLander, 512 for MiniGrid). 
OUI is computed layer-wise as defined in Section~\ref{sec:theory} and averaged across hidden layers of each branch. 
It is evaluated every 1\% of total PPO updates, and the early structural metric corresponds to the checkpoint at 10\% of the updates. Lastly, return is measured as the moving average over the last 50 completed episodes.

\begin{figure*}[!p]
\centering
\includegraphics[width=\linewidth]{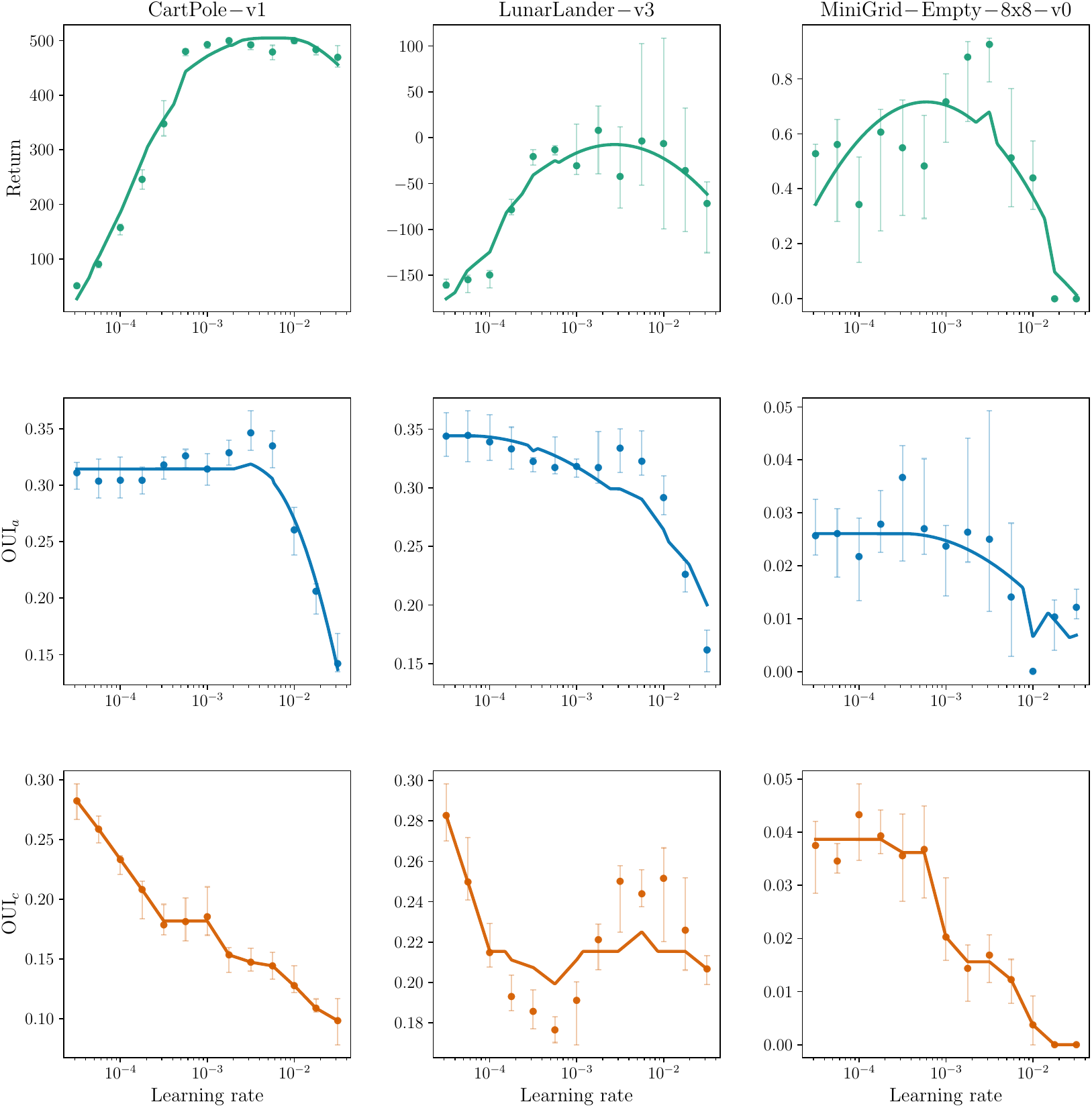}
\caption{Return (top row), actor OUI ($\textup{OUI}_a$, middle row), and critic OUI ($\textup{OUI}_c$, bottom row) as a function of LR across CartPole, LunarLander, and MiniGrid. Metrics are measured at 10\% of training. Points denote median across seeds; shaded regions indicate IQR.}
\label{fig:main}
\end{figure*}

\subsection{Structural Regimes Across Learning Rates}

The regimes observed in Figure~\ref{fig:main} admit a coherent structural explanation consistent with Section~\ref{sec:theory}. 
Increasing the LR amplifies directional drift in activation balances. 
The key distinction across regimes is whether this drift induces productive reorganisation or premature saturation.

In the under-aggressive regime (low LR), critic OUI is high but largely static: features remain distributed yet structurally inert, reflecting limited drift. 
Learning progresses, but slowly. 
In the over-aggressive regime (high LR), drift becomes excessive. 
Many critic features are rapidly pushed toward saturation ($p_j \to 0$ or $1$), reducing representational flexibility. 
Actor OUI eventually collapses as well, and return degrades sharply. 
Notably, the structural collapse of the critic typically precedes the full drop in performance.

The optimal regime lies between these extremes. 
Here, the critic exhibits measurable reorganisation without saturating, while the actor maintains consistently high OUI. 
Maximal return systematically coincides with this combination: structurally active policy representations and a critic operating in a non-saturated band. 
This aligns directly with the structural-drift picture: useful learning requires drift, but not drift that destroys diversity.

\begin{table*}[!t]
\centering
\scriptsize
\renewcommand{\arraystretch}{1.18}
\setlength{\tabcolsep}{3.5pt}
\caption{Recall-matched comparison of early screening rules. Success is defined as belonging to the top 20\% of final return within each environment. Each cell reports precision, recall, and $n_Q$, stacked vertically. Only entries with $n_Q \ge 10$ are shown. Best precision in each recall bin is highlighted in bold.}
\label{tab:recall_matched_rules}

\begin{tabular*}{\textwidth}{@{\extracolsep{\fill}}lccccccccccc@{}}
\toprule
\thead{Recall bin} & Ret. & OUI & KL & Clip & Div. & Flip & \thead{Ret.+OUI} & \thead{Ret.+KL} & \thead{Ret.+Clip} & \thead{Ret.+Div.} & \thead{Ret.+Flip} \\
\midrule
$(0.00,0.05]$
& \NA
& \NA
& \triple{0.01}{0.03}{153}
& \triple{0.03}{0.05}{116}
& \NA
& \triple{0.09}{0.04}{34}
& \triple{\textbf{0.40}}{0.05}{10}
& \triple{0.36}{0.05}{11}
& \triple{\textbf{0.40}}{0.05}{10}
& \triple{\textbf{0.40}}{0.05}{10}
& \triple{\textbf{0.40}}{0.05}{10} \\
\midrule
$(0.05,0.10]$
& \NA
& \triple{0.58}{0.09}{12}
& \triple{0.04}{0.07}{173}
& \NA
& \triple{0.62}{0.10}{13}
& \triple{0.42}{0.10}{19}
& \triple{\textbf{0.80}}{0.10}{10}
& \triple{0.47}{0.10}{17}
& \triple{0.47}{0.10}{17}
& \triple{0.47}{0.10}{17}
& \triple{0.46}{0.06}{11} \\
\midrule
$(0.10,0.15]$
& \triple{0.42}{0.14}{26}
& \triple{0.64}{0.11}{14}
& \triple{0.50}{0.11}{18}
& \triple{0.55}{0.14}{20}
& \triple{0.56}{0.11}{16}
& \triple{0.18}{0.15}{68}
& \triple{\textbf{0.82}}{0.11}{11}
& \triple{0.53}{0.12}{19}
& \triple{0.50}{0.14}{22}
& \triple{0.53}{0.12}{19}
& \triple{0.39}{0.12}{26} \\
\midrule
$(0.15,0.20]$
& \NA
& \triple{0.47}{0.19}{32}
& \triple{0.40}{0.20}{40}
& \triple{0.42}{0.20}{38}
& \triple{0.42}{0.20}{38}
& \triple{0.45}{0.16}{29}
& \triple{\textbf{0.59}}{0.16}{22}
& \triple{0.50}{0.20}{32}
& \triple{0.50}{0.19}{30}
& \triple{0.50}{0.20}{32}
& \NA \\
\midrule
$(0.20,0.25]$
& \triple{0.41}{0.22}{44}
& \triple{\textbf{0.47}}{0.22}{38}
& \NA
& \NA
& \triple{0.40}{0.25}{50}
& \triple{0.30}{0.22}{60}
& \triple{0.42}{0.22}{43}
& \triple{0.47}{0.25}{43}
& \triple{0.46}{0.22}{39}
& \triple{0.47}{0.25}{43}
& \NA \\
\midrule
$(0.25,0.30]$
& \triple{0.38}{0.28}{60}
& \triple{\textbf{0.50}}{0.28}{46}
& \triple{0.38}{0.28}{60}
& \triple{0.39}{0.28}{59}
& \triple{0.40}{0.28}{58}
& \triple{0.27}{0.26}{78}
& \triple{0.38}{0.27}{58}
& \triple{0.46}{0.26}{46}
& \triple{0.45}{0.26}{47}
& \triple{0.45}{0.26}{47}
& \NA \\
\bottomrule
\end{tabular*}

\vspace{3pt}
\begin{minipage}{\textwidth}
\footnotesize\textit{Notes.} Early return is defined from the leave-seed-out percentile of return at 10\% of training within each environment. OUI selects runs with high actor OUI and intermediate critic OUI. Divergence combines KL- and clip-based signals, and flip measures the fraction of units whose sign pattern on $S_{\mathrm{probe}}$ changes between evaluations.
\end{minipage}
\end{table*}

\subsection{Early Screening of Training Runs}

A crucial observation is that these structural regimes are already clearly separated at 10\% of training and remain stable for the rest of the training process.
This suggests that early measurements may be useful not only descriptively, but also as screening signals for deciding which runs are worth continuing.

To test this hypothesis, we compare OUI against early signals that are already available in PPO: \textit{return\_only}, \textit{approx\_kl}, \textit{clip\_fraction}, their conjunction (\textit{divergence}), sign-flip statistics (\textit{flip}), and conjunctions of early return with each of these signals.
Success is defined as belonging to the top 20\% of final return within each environment.
Because precision is strongly coupled to screening aggressiveness, we compare all rules under matched recall over successful runs.
Table~\ref{tab:recall_matched_rules} reports, for each recall bin, the precision of the selected subset together with its recall and support $n_Q$; only rules with $n_Q \ge 10$ are retained to avoid unstable small-support cases.

Two conclusions emerge.
First, OUI adds clear value beyond early return.
In the $(0.10,0.15]$ recall bin, \textit{return+OUI} achieves a precision of $0.818$ at recall $0.111$ with $n_Q=11$, whereas \textit{return\_only} reaches only $0.423$ at recall $0.136$ with $n_Q=26$.
The corresponding Benjamini--Hochberg FDR-corrected $q$-value over all candidate rules is $5.13\times 10^{-4}$ for \textit{return+OUI}, indicating that this high-precision regime is statistically robust.
In practical terms, this mode retains only $11$ of the $390$ runs, pruning $97.2\%$ of the search space, while $81.8\%$ of the retained runs are successful.

Second, OUI is not only useful in conjunction with return.
As a standalone structural signal, it becomes the strongest selector once broader recall is allowed.
In the $0.20$--$0.30$ recall range, \textit{OUI} alone achieves the best precision among all individual rules.
Moreover, in the $(0.15,0.20]$ bin, \textit{return+OUI} still reaches a precision of $0.591$ at recall $0.160$ with $n_Q=22$, outperforming the best non-OUI alternative (approximately $0.500$), with a Benjamini--Hochberg FDR-corrected $q$-value of $1.03\times 10^{-3}$ over all candidate rules.

Overall, OUI plays two complementary roles: it is the strongest broad-recall structural selector on its own, and it yields the highest-precision screening mode when combined with early return.
This substantially strengthens the case for OUI as a practical early criterion for selecting promising LR and seed configurations before full convergence.

\balance
\section{Conclusion}
\label{sec:conclusion}

We introduced a structural perspective on LR selection in PPO actor--critic systems. 
Through a batch-based formulation of OUI and an analysis of its behavior under gradient updates, we showed that the LR controls the magnitude of structural motion, while OUI captures its direction in terms of representational balance.

Across environments and seeds, experiments consistently reveal three regimes already visible at 10\% of training: structural inertia at low LRs, productive reorganisation at intermediate values, and structural collapse at high LRs. 
The highest-performing runs systematically occupy the intermediate regime, characterized by sustained actor OUI and a critic that reorganises without saturating. 
This asymmetric structural behavior between actor and critic provides a concrete signature of stable learning. 

Building on this observation, we evaluated OUI as an early screening signal under matched recall against alternatives derived from early return, KL, clipping, divergence, and activation flips.
The resulting picture is sharper than a single enrichment statistic.
OUI alone is the strongest standalone selector at broader recall, while the conjunction \textit{return+OUI} delivers the highest-precision regime, substantially improving over early return alone in the same matched-recall range.

In particular, in a high-precision screening mode, \textit{return+OUI} retains only $11$ of the $390$ runs, of which $81.8\%$ are successful, compared with $42.3\%$ for \textit{return\_only} in the same recall bin.
This makes OUI useful in two complementary ways: as a structural selector when aiming to recover a broader fraction of successful runs, and as a precision amplifier when combined with early return to isolate a very small subset of highly promising runs.

As a result, OUI is positioned as more than a descriptive structural metric.
It provides a computationally inexpensive signal for early screening of LR and seed configurations, enabling successful pruning of unpromising runs in RL.

\section*{Limitations and Future Work}

Our analysis is restricted to PPO and to discrete-control benchmarks with relatively small networks. 
Although the structural regimes identified here are consistent across the three tasks considered, it remains to validate this idea in continuous-control domains such as MuJoCo or DMControl or in other actor--critic variants. Beyond RL, another promising direction is to investigate whether similar structural signals arise in other deep learning paradigms, such as supervised or self-supervised learning, where optimization dynamics and representation evolution may exhibit analogous patterns.

The probe batch is fixed throughout training in order to ensure comparability across runs. 
While this design is appropriate for the present study, alternative probe constructions or adaptive probes may reveal additional structural phenomena and deserve further investigation.

On the theoretical side, our analysis isolates the first-order flip mechanism linking LR, activation sign changes, and OUI dynamics. 
It does not model higher-order effects introduced by PPO-specific ingredients such as clipping, distributional shift, or value-function bootstrapping. 
Extending the theory to these factors is therefore an important direction for future work.

\medskip 

A particularly promising avenue is to move from diagnosis to control. 
The structural regimes identified here suggest adaptive optimization strategies in which actor and critic LRs are adjusted separately to keep the critic within a non-saturated OUI band while maintaining high actor OUI. 
More broadly, online LR adaptation driven by structural feedback may reduce the need for expensive hyperparameter sweeps and help stabilize RL training across a wider range of settings.

\section*{Acknowledgements}
This research was funded by the projects PID2023-146569NB-C21 and PID2023-146569NB-C22 supported by MICIU/AEI/10.13039/501100011033 and ERDF/UE. Alberto Fernández-Hernández was supported by the predoctoral grant PREP2023-001826 supported by MICIU/AEI/10.13039/501100011033 and ESF+. Cristian Pérez-Corral received support from the \textit{Conselleria de Educación, Cultura, Universidades y Empleo} (reference CIACIF/2024/412) through the European Social Fund Plus 2021–2027 (FSE+) program of the \textit{Comunitat Valenciana}. Manuel F. Dolz was supported by the Plan Gen--T grant CIDEXG/2022/13 of the \emph{Generalitat Valenciana}.

\bibliographystyle{IEEEtran}
\bibliography{references}

\end{document}